\theoremstyle{definition}
\newtheorem{definition}{Definition}[section]
\newtheorem{remark}{Remark}[section]
\newtheorem{example}{Example}[section]
\newtheorem{proposition}{Proposition}[section]
\newcommand{\EFA}[1]{\mathcal{E}_{\mathcal{F},\mathcal{A}}(#1)}
\definecolor{real}{HTML}{A7E0A7}
\definecolor{stated}{HTML}{FFE699}    
\definecolor{estimated}{HTML}{F4A6A6}
\definecolor{fullaccess}{RGB}{0,70,130}
\definecolor{partialaccess}{RGB}{120,170,220}
\definecolor{explainonly}{RGB}{200,225,245}
\title{\vspace{-3em}On the Definition and Detection of Cherry-Picking in Counterfactual Explanations}
\author{
\begin{tabular}{c}
James Hinns\textsuperscript{1} \quad
Sofie Goethals\textsuperscript{1} \quad
Stephan Van der Veeken\textsuperscript{1} \\
Theodoros Evgeniou\textsuperscript{2} \quad
David Martens\textsuperscript{1} \\
[0.8em]
\textsuperscript{1}University of Antwerp \\
\textsuperscript{2}INSEAD
\end{tabular}
}
\date{}
\begin{document}

\maketitle

\begin{abstract}
    \noindent Counterfactual explanations are widely used to communicate how inputs must change for a model to alter its prediction. For a single instance, many valid counterfactuals can exist, which leaves open the possibility for an explanation provider to cherry-pick explanations that better suit a narrative of their choice, highlighting favourable behaviour and withholding examples that reveal problematic behaviour. We formally \textit{define} cherry-picking for counterfactual explanations in terms of an admissible explanation space, specified by the generation procedure, and a utility function. We then study to what extent an external auditor can \textit{detect} such manipulation. Considering three levels of access to the explanation process: full procedural access, partial procedural access, and explanation-only access, we show that detection is extremely limited in practice. Even with full procedural access, cherry-picked explanations can remain difficult to distinguish from non cherry-picked explanations, because the multiplicity of valid counterfactuals and flexibility in the explanation specification provide sufficient degrees of freedom to mask deliberate selection. Empirically, we demonstrate that this variability often exceeds the effect of cherry-picking on standard counterfactual quality metrics such as proximity, plausibility, and sparsity, making cherry-picked explanations statistically indistinguishable from baseline explanations. We argue that safeguards should therefore prioritise reproducibility, standardisation, and procedural constraints over post-hoc detection, and we provide recommendations for algorithm developers, explanation providers, and auditors. \vspace{-1em}
\end{abstract}

\section{Introduction}

Artificial intelligence (AI) increasingly shapes decisions that affect individuals, organisations, 
and society at large. Yet the complexity of modern AI systems often renders them difficult to 
interpret, creating a substantial obstacle to their adoption and effective 
oversight~\cite{arrieta2020explainable,samek2019explainable}. In response, the field of 
Explainable Artificial Intelligence (XAI) has emerged, developing methods that 
make the decision-making processes of AI models more 
transparent~\cite{adadi2018peeking,xu2019explainable}.
Explanations serve a variety of purposes, including improving user trust, supporting accountability, enabling compliance with regulatory requirements, while also benefiting developer understanding and model robustness. For example, the EU General Data Protection
Regulation (GDPR) is often discussed in relation to transparency and explanation obligations for individuals affected by 
algorithmic decision-making \cite{ponce2025right}, while the EU AI Act mandates explainability in high-stakes 
applications~\citep{goodman2017european}. Such requirements motivate the use of explanation methods not only for end users but also for developers and external auditors tasked with assessing compliance and potential harms.
Because there is no consensus on how to assess explanation quality, explanation methods can differ greatly from one another~\cite{de2021framework}. Moreover, many counterfactual explanation procedures rely on heuristic, non-exhaustive search rather than fully exploring the space of valid counterfactuals, so small changes to parameters or random seeds can also produce substantially different results~\cite{brughmans2024nice,glime}. This phenomenon is known as \textit{the disagreement problem}~\citep{brughmans2024disagreement,krishna2024disagreement,neely2021order,roy2022don,martens2025beware}. This plurality of explanations opens the door to \textit{cherry-picking}, where an explanation provider selectively discloses an explanation while withholding alternatives that would be preferable under an agreed criterion. While cherry-picking is frequently raised as a conceptual concern, the literature provides no precise definition of what it means to cherry-pick counterfactual explanations, and offers only limited discussion of when such behaviour can be detected or how it might be mitigated in practice.

\begin{example}
    Consider a fictional case of cherry-picking counterfactual explanations. Bob uses a model for loan decisions based on three features: income, gender, and years worked. After being rejected, Alice exercised her right under the GDPR to request an explanation. Bob generated several counterfactual explanations and presented Alice with the following:
    \begin{quote}
        ``If you had earned €10000 more and worked two more years, you would have been accepted.'' 
    \end{quote}
    This appeared reasonable to Alice, who received a raise and was accepted when she reapplied two years later. 
    However, another valid counterfactual also existed:
    \begin{quote}
        ``If you had identified as male instead, you would have been accepted.''
    \end{quote}
    Bob stated a preference for explanations with fewer feature changes, under which the second explanation was preferable, yet he presented the first to avoid revealing that the decision depended on gender. This selective presentation against his stated preference constitutes cherry-picking.
\end{example}

Our goal is twofold. First, we formalise cherry-picking for counterfactual explanations. Second, we study when, and under what information access, cherry-picking can be detected by an external auditor. To isolate the detection problem, we consider a simple setting in which an explanation provider withholds counterfactuals that would reveal reliance on sensitive attributes. We then evaluate to what extent cherry-picking can be uncovered across different levels of access to the explanation procedure. \\

Our findings show that even in this minimal setup, detection is challenging. This has direct implications for auditing and compliance: with many admissible explanations and limited audit coverage, reproducing a reported explanation does not certify the stated procedure or rule out selective disclosure. We conclude by discussing how auditors, explanation providers, and algorithm developers can use these insights to design more robust transparency mechanisms, strengthening explainability for auditing and accountability while reducing the scope for obfuscation.

\vspace{-1em}
\section{Related work}
\label{sec:related_work}

The field of XAI aims to make the predictions of machine learning models understandable to humans,
which is increasingly important in high-stakes domains where trust, accountability, and regulatory
compliance are critical \cite{doshi2017towards}. Explanations can serve multiple purposes: they can foster trust among users and decision-makers, support model developers in debugging and improving systems, reveal hidden biases or fairness concerns, and help organisations meet regulatory transparency requirements~\citep{ferrario2022explainability, goethals2024precof,martens2014explaining,vermeire2022explainable,wachter2017counterfactual}.
Test set performance alone provides a limited view of model
quality, since distinct models can achieve similar results while inducing different inductive biases
\cite{d2022underspecification,fisher2019all}. This work focuses on explanation multiplicity, where
even for a fixed model and a single data instance, multiple distinct explanations may
be returned, not only across methods but also within the same method
\cite{hinns2021initial,brughmans2024disagreement}.

\subsection{Counterfactual explanations and disagreement}

We focus on counterfactual explanations, which describe how an input instance can be minimally
changed to alter a model’s prediction \cite{martens2014explaining,wachter2017counterfactual}.
Counterfactual quality is commonly assessed using criteria such as \textit{proximity}, the distance
between the factual instance and its counterfactual \cite{brughmans2024nice,mothilal2020explaining,martens2014explaining,wachter2017counterfactual}; \textit{plausibility}, the extent to which a
counterfactual resembles realistic instances (often approximated by conformity to the empirical data
distribution); and \textit{sparsity}, the number of features changed (often seen as a special case of
proximity \cite{brughmans2024disagreement}). Beyond these, many other metrics have been proposed,
and each may be defined or implemented in several ways. Consequently, the notion of a ``minimal
change'' varies across methods, leading to multiple valid counterfactuals for the same instance. This
phenomenon is known as the \textit{disagreement problem} \cite{brughmans2024disagreement}.
Empirical studies show that counterfactual methods can produce widely different results, particularly
when they optimise for different quality metrics \cite{brughmans2024disagreement}. While we focus on
counterfactual explanations, the lack of a single agreed target is not unique to counterfactuals. What
counts as a good explanation depends on the chosen definition and implementation, and disagreement
has been observed beyond counterfactuals \cite{krishna2024disagreement,neely2021order}.

\subsection{Manipulation and fairwashing through explanations}

The possibility of returning multiple valid counterfactual explanations creates opportunities for manipulation, a risk previously identified in work on the disagreement problem \cite{barocas2020hidden,mothilal2020explaining,goethals2023manipulation}. Empirical studies show that, when multiple counterfactual explanations are available, a user can often select one that includes or omits particular features, making it easy to avoid sensitive attributes and thereby justify a decision misleadingly \cite{brughmans2024disagreement}. In settings where the interests of explanation providers diverge from those of recipients, such explanation variability can be exploited to meet the provider’s goals. \\

A prominent instance of this risk is fairwashing, where explanations are manipulated to make a model appear fairer than it truly is. Much of the research on fairwashing focuses on surrogate models: smaller, inherently interpretable models designed to approximate the behaviour of a complex black-box model. These surrogates are typically optimised for fidelity, a metric that measures how closely the surrogate reproduces the predictions of the original model on some test data. When constructing a surrogate, it is possible to optimise simultaneously for fidelity and fairness, producing a model that maintains high fidelity while appearing fairer than the black-box~\cite{aivodji2019fairwashing,lakkaraju2020fool}. Because of predictive multiplicity, there are often many surrogate models with comparable fidelity but different levels of fairness. To assess this risk, it has been proposed to examine the Rashomon set and measure the range of fairness among surrogate models with similar fidelity \cite{aivodji2021characterizing}. Another approach compares the surrogate’s fidelity to that of the black-box across subpopulations~\cite{shahin2022washing}; large subgroup disparities in fidelity can suggest fairwashing, although some divergence is inevitable whenever the surrogate is imperfect. In a related vein, it has been shown that models can behave in a biased manner on real data while producing LIME and SHAP explanations that appear unbiased, exploiting the fact that perturbations generated for these explanations often lie outside the data manifold \cite{slack2020fooling}. While several detection strategies have been proposed, fairwashing cannot be completely eliminated in general~\cite{shahin2022washing}.

\subsection{Regulation and Accountability}

To address these risks, researchers have called for stronger regulation that demands
transparency about how explanations are produced, encourages the use of white-box models,
and develops clear standards and auditing frameworks to prevent explanations from being
tailoured entirely at the provider's discretion \cite{goethals2023manipulation}.
Similar concerns have been raised in broader discussions of algorithmic accountability,
which argue for legal mechanisms that guarantee a ``right to explanation'' and ensure
regulatory oversight of automated decision-making \cite{goodman2017european}.
Others have emphasised that such oversight must extend beyond documentation to include
systematic auditing practices \cite{raji2020closing}, while promoting the use of inherently
interpretable models to avoid the opacity and manipulability of black-box systems
\cite{rudin2019stop}.

\section{Defining Cherry-Picked Explanations}

\vspace{-1em}

\begin{figure}[h]
    \centering
    \begin{minipage}[c]{0.50\textwidth} 
        \centering
        \includegraphics[width=\linewidth]{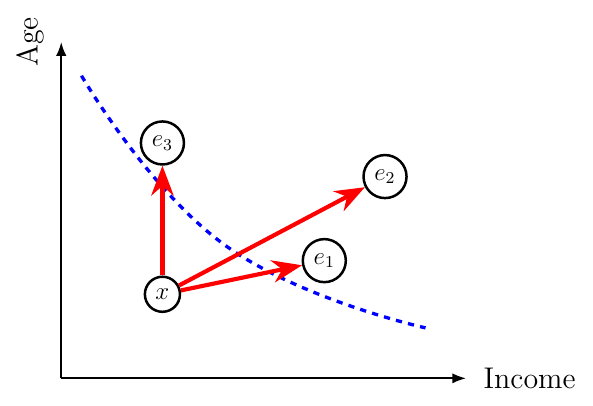}%
    \end{minipage}
    \hfill
    \begin{minipage}[c]{0.45\textwidth}
        \centering
        \small 
        \renewcommand{\arraystretch}{1.2} 
        \begin{tabular}{lcc}
            \toprule
            \textbf{ID} & \textbf{Income} & \textbf{Age} \\
            \midrule
            $x$ & €25000 & 25 \\
            \midrule
            $e_1$ & €65000 & 35 \\
            $\mathrm{cfe}_1$ & +€40000 & +10 \\
            \addlinespace
            $e_2$ & €80000 & 60 \\
            $\mathrm{cfe}_2$ & +€55000 & +35 \\
            \addlinespace
            $e_3$ & €25000 & 70 \\
            $\mathrm{cfe}_3$ & +€0 & +45 \\
            \bottomrule
        \end{tabular}
    \end{minipage}

    \caption{Toy example of multiple counterfactual instances ($e_1, e_2, e_3$) for the same data instance $x$. The corresponding counterfactual explanations are the red arrows $\mathrm{cfe}_i = e_i - x$, each crossing the blue decision boundary. The table reports the feature values for $x$ and each $e_i$, and the feature changes encoded by each $\mathrm{cfe}_i$.}
    \label{fig:cf_explanation_combined}
\end{figure}

A counterfactual identifies the minimal change to an instance such that, if that evidence were altered or removed, the model’s prediction would change \cite{martens2014explaining}. In Figure \ref{fig:cf_explanation_combined}, $x$ is the instance to be explained and $e_i$ are counterfactual instances; the counterfactual explanation is the vector $\mathrm{cfe}_i = e_i - x$, representing the minimal set of feature changes needed to change the prediction. Since a counterfactual explanation is only meaningful in relation to the instance $x$, whenever we refer to a counterfactual explanation we assume access to $x$, and by extension to the corresponding counterfactual instance $e_i$. As discussed in Section~\ref{sec:related_work}, the notion of minimality is operationalised via counterfactual quality metrics, which impose a (weak) ordering over feasible counterfactuals. In Figure \ref{fig:cf_explanation_combined}, $e_3$ is optimal under sparsity (it changes the fewest features), whereas $e_1$ is optimal under proximity when distances are comparably scaled (e.g., Euclidean distance after min–max feature scaling, or HEOM \cite{wilson1997improved}). \\

The following definition formalises a normative view of cherry-picking, assuming that the explanation specification is sound and fixed independently of the explanation outcome. We refer to the complete procedure used to generate and select counterfactuals as the explanation specification (e.g., the model(s) and data, the counterfactual generation method(s), and the ranking criterion used to order candidate counterfactuals). What constitutes a sound specification is task-specific; we give general considerations for specification soundness, but do not address questions such as fairness of the specification. This definition formalises the idea that a cherry-picked explanation is one that is not top-ranked under the stated ranking, rather than providing a directly testable detection criterion; we discuss detectability of cherry-picking in counterfactuals in Section \ref{sec:detecting}.

\begin{definition}
\label{def:cherry_picking_non_inject}
\textbf{Cherry-picked counterfactual explanation with respect to utility.} \\

Let
\begin{align*}
  &\mathcal{X} &&\text{be the instance space},\\
  &\mathcal{E} &&\text{be the counterfactual space},\\
  &\mathcal{F} &&\text{be the set of trained models considered},\\
  &\mathcal{A} &&\text{be the set of explanation methods considered},\\
  &u:\mathcal{X}\times\mathcal{E}\to\mathbb{R}
  &&\text{be the utility function}.
\end{align*}

For an instance $x \in \mathcal{X}$, define the admissible explanation space for $\mathcal{F}$ \textit{and} $\mathcal{A}$ as
\[
  \EFA{x}
  \;=\;
  \{\, A_f(x) \;\mid\; f \in \mathcal{F},\ A \in \mathcal{A} \,\}
  \subseteq \mathcal{E},
\]
where $A_f(x)$ is the explanation produced by $A$ for $f$ at $x$.
Throughout, we assume that for every \(f\in\mathcal{F}\) and \(A\in\mathcal{A}\),
the map \(A_f:\mathcal{X}\to\mathcal{E}\) and the function \(u\) are single-valued\footnotemark[1],
so that for each \(x\in\mathcal{X}\) the output \(A_f(x)\) is well-defined as a
single explanation, and repeated evaluations of \(A_f(x)\) return the same value.
\footnotetext[1]{If a model \(f\) or explanation method \(A\) is implemented as a non-deterministic procedure, we conceptually treat each fixed realisation (e.g., one choice of random seed) as a separate element of \(\mathcal{F}\) or \(\mathcal{A}\), so that \(f(x)\) and \(A_f(x)\) are well-defined.}

We assume \(\EFA{x}\) is countable and let \(\mathrm{rank}_x\) be a bijection from \(\EFA{x}\) to \(\mathbb{N}\) that induces a strict ranking of \(\EFA{x}\) consistent with \(u(x,\cdot)\), and therefore breaks ties when \(u(x,e)=u(x,e')\).

An explanation $e \in \EFA{x}$ is said to be \textit{cherry-picked} if
\[
  \mathrm{rank}_x(e) > 1.\text{\footnotemark[2]}
\]
\footnotetext[2]{Equivalently, $e$ is cherry-picked if it is not top-ranked in $\EFA{x}$, i.e., if there exists an explanation $e' \in \EFA{x}$ such that $\mathrm{rank}_x(e') < \mathrm{rank}_x(e)$.}

\end{definition}

To illustrate Definition~\ref{def:cherry_picking_non_inject}, we provide Example~\ref{ex:spars-def}, a simple toy counterfactual set in which an explanation is selected to avoid changing a sensitive feature, despite there existing a strictly better explanation under the utility \(u\) (in this case, a sparsity-based utility). The example complements the definition by giving concrete instantiations of both \(u\) and a bijective ranking \(\mathrm{rank}_x\) that is consistent with \(u\) and resolves utility ties via a fixed tie-breaking rule (here, \(\mathrm{ID}(\cdot)\)), so that cherry-picking can be verified directly from \(\EFA{x}\). This links back to the counterfactual notion of a \textit{minimal} change: in practice, minimality is determined by a utility function, so reporting any counterfactual that is not minimal among the feasible candidates under \(u\) is cherry-picking. \\

In practice, $u(x,\cdot)$ is often non-injective, so multiple explanations in $\EFA{x}$ may share the same utility.
To ensure that $\mathrm{rank}_x$ is a bijection (and hence that “the best” explanation is well-defined), the specification must include a deterministic tie-breaking rule.
A simple choice is \textit{first-found} tie-breaking, where $\mathrm{rank}_x$ returns the first explanation among those with maximal utility.
In later examples, we implement this by ordering tied explanations using a fixed identifier $\mathrm{ID}(\cdot)$.
Any such tie-breaking is arbitrary, so ties by themselves do not imply cherry-picking; however, frequent ties indicate that the utility and ranking specification is insufficiently discriminative, casting doubt on the specification’s soundness.
We return to specification soundness in Section~\ref{sec:discussion}.

\begin{example}
\textbf{Cherry-picked counterfactual with respect to sparsity utility.}
\label{ex:spars-def}

Let \(f\) be a binary loan approval model with four features:
\(\text{Income} \in \mathbb{N}\) (in €), \(\text{Gender} \in \{F,M\}\),
\(\text{Employment} \in \{\text{Temporary},\text{Permanent}\}\), and \(\text{Age} \in \mathbb{N}\) (years). \\

Consider the sparsity-based utility function:
\[
  u(x,e)
  \;=\;
  -\,\bigl\lVert x-e \bigr\rVert_0
  \;=\;
  -\sum_{j \in \{\mathrm{inc,gen,emp,age}\}} \mathbf{1}[x_j \neq e_j].
\]
Higher utility means fewer changed features (lower sparsity). Since \(u(x,\cdot)\) can tie, we define a
utility-consistent bijective rank by breaking ties using \(\mathrm{ID}(\cdot)\). \\

For this example, \(\EFA{x}\) is finite. Define \(\mathrm{rank}_x\) as the position induced by sorting
explanations in decreasing order of \(u(x,\cdot)\), with ties broken deterministically using
\(\mathrm{ID}(\cdot)\). Formally,
\[
  \mathrm{rank}_x(e) < \mathrm{rank}_x(e')
  \quad\Longleftrightarrow\quad
  \bigl(u(x,e) > u(x,e')\bigr)
  \ \ \text{or}\ \ 
  \bigl(u(x,e)=u(x,e') \ \wedge\ \mathrm{ID}(e) < \mathrm{ID}(e')\bigr).
\]
This \(\mathrm{rank}_x\) is a bijection from \(\EFA{x}\) to \(\{1,\dots,n\}\subset\mathbb{N}\),
and it satisfies \(u(x,e')>u(x,e)\Rightarrow \mathrm{rank}_x(e')<\mathrm{rank}_x(e)\).
When \(u(x,e)=u(x,e')\), the lower \(\mathrm{ID}\) receives the lower rank.

Take the instance:
\[
  x = (\text{Income}=25000,\ \text{Gender}=F,\ \text{Employment}=\text{Temporary},\ \text{Age}=30),
  \quad f(x)=0.
\]
Suppose \(\EFA{x}\) contains the following valid counterfactual explanations \(e_1,\dots,e_5\),
each satisfying \(f(e_i)=1\):
\[
\begin{array}{c|c|c|c|c|c|c}
\text{ID} & \text{Income} & \text{Gender} & \text{Employment} & \text{Age} & u(x,e_i) & \mathrm{rank}_x(e_i) \\ \hline
e_1 & 28000 & M & \text{Temporary} & 30 & -2 & 1 \\
e_2 & 30000 & F & \text{Permanent} & 35 & -3 & 2 \\
e_3 & 26000 & F & \text{Permanent} & 31 & -3 & 3 \\
e_4 & 35000 & M & \text{Temporary} & 32 & -3 & 4 \\
e_5 & 30000 & M & \text{Permanent} & 35 & -4 & 5 \\
\end{array}
\]

In this set, the utility-optimal counterfactual is \(e_1\), since it changes only two features
(sparsity \(=2\): Income and Gender), hence \(\mathrm{rank}_x(e_1)=1\). \\

However, the explanation provider prefers not to return a counterfactual that changes \textit{Gender},
and therefore selects \(e_4\) instead of the utility-optimal \(e_1\). \\

Thus, by Definition~\ref{def:cherry_picking_non_inject}, \textit{any} explanation other than \(e_1\) is \textit{cherry-picked}.
Since \(e_1\) is top-ranked under the stated sparsity utility, returning any \(e \neq e_1\) implies \(\mathrm{rank}_x(e) > 1\).
\end{example}

In practice, however, an external auditor would not have access to the \textit{real} specification \((\mathcal{F},\mathcal{A},u)\) that governed the provider’s choice. Instead, the auditor only observes a \textit{stated} specification: the models, explanation methods, utility function, and explanation set that the provider discloses, which need not be complete or truthful. The next example shows how this gap allows for \textit{omittance}: by withholding or selectively reporting specification components, a provider can make a cherry-picked explanation appear consistent with the stated specification.

\begin{example}
\textbf{Omittance via post hoc utility selection.}
\label{ex:omittance}

Under Definition~\ref{def:cherry_picking_non_inject}, cherry-picking is determined with respect to the \textit{true} utility \(u\).
However, as auditors only observe \(\EFA{x}\) and the \textit{stated} specefication, and the provider can choose which utility to disclose after inspecting \(\EFA{x}\), cherry-picking can be concealed by disclosing an alternative, seemingly reasonable utility under which the chosen explanation is top-ranked.
We illustrate this using the same \(\EFA{x}\) as in Example~\ref{ex:spars-def}. Assume the provider wishes to report an explanation that does not alter \(\text{Gender}\), and therefore wants to return either $e_2$ or $e_3$.
Under the sparsity utility from Example~\ref{ex:spars-def}, \(e_1\) is top-ranked, so returning $e_2$ or $e_3$ is cherry-picking with respect to utility $u$. \\

To argue that no cherry-picking occurred, the provider discloses a different utility that makes $e_2$ or $e_3$ appear optimal.
One plausible choice is a proximity-based utility defined on normalised feature values, using negative Euclidean distance:
\[
u'(x,e) \;=\; - \lVert x - e \rVert_2
\;=\; -\sqrt{\sum_{i=1}^{d} (x_i - e_i)^2}.
\]

where \(x\) and \(e\) are represented in a normalised and encoded feature space.

Using min--max normalisation for the numerical features (over \(\{x,e_1,\dots,e_5\}\)) and binary encodings
\(F\mapsto 0\), \(M\mapsto 1\), \(\text{Temp}\mapsto 0\), \(\text{Perm}\mapsto 1\),
we obtain:
\[
\begin{array}{c|c|c|c|c|c|c}
\text{ID} & \text{Income} & \text{Gender} & \text{Employment} & \text{Age} & u'(x,e_i) & \mathrm{rank}'_x(e_i) \\ \hline
x   & 0.0 & 0 & 0 & 0.0 & 0.000  & - \\ \hline
e_1 & 0.3 & 1 & 0 & 0.0 & -1.044 & 2 \\
e_2 & 0.5 & 0 & 1 & 1.0 & -1.500 & 4 \\
e_3 & 0.1 & 0 & 1 & 0.2 & -1.025 & 1 \\
e_4 & 1.0 & 1 & 0 & 0.4 & -1.470 & 3 \\
e_5 & 0.5 & 1 & 1 & 1.0 & -1.803 & 5 \\
\end{array}
\]

Crucially, \(u'\) is a standard and seemingly sound choice in its own right (a proximity-based utility can be preferable when sparsity produces many ties in \(\EFA{x}\), as it did in Example \ref{ex:spars-def}), yet the min-max scaling implicitly biases the ranking towards continuous changes over discrete ones, illustrating how omittance can remain plausible and why soundness evaluation is non-trivial. \\

Here \(\mathrm{rank}'_x\) is defined by sorting \(\EFA{x}\) by decreasing \(u'(x,\cdot)\) and setting \(\mathrm{rank}'_x\) equal to the resulting list position. Since \(u'\) assigns distinct values for all explanations in \(\EFA{x}\), i.e., \(\forall\, e_i,e_j \in \EFA{x},\; e_i \neq e_j \Rightarrow u'(x,e_i) \neq u'(x,e_j)\), no tie-breaking rule is required.
Thus, under the disclosed utility \(u'\), the provider can report \(e_3\) and claim that no cherry-picking occurred, since \(\mathrm{rank}'_x(e_3)=1\). \\

The problem is that \(u'\) was selected after inspecting \(\EFA{x}\).
The provider can omit the utility they actually optimised for (the true \(u\) in Definition~\ref{def:cherry_picking_non_inject}) and instead disclose a different, reasonable-looking utility under which the reported explanation becomes top-ranked.
We refer to this as \emph{omittance}.

\end{example}

\section{Detecting Cherry-Picked Explanations}
\label{sec:detecting}

\begin{table}[ht]
  \renewcommand{\arraystretch}{1.3}
  \centering
  \begin{tabular}{p{2.6cm}p{1.5cm}p{1.7cm}p{1.5cm}p{5.5cm}}
    \toprule
    Scenario & Models $\mathcal{F}$ & Methods $\mathcal{A}$ & Utility $u$ & Description \\
    \midrule

    \cellcolor{fullaccess!40}\textbf{Ideal \hspace{2em}(generator)} 
      & \cellcolor{real} Real & \cellcolor{real} Real & \cellcolor{real} Real
      & Ground truth for all components, available only to the generator. \\[0.4em]

    \cellcolor{fullaccess!40}\textbf{Best practice} 
      & \cellcolor{stated} Stated & \cellcolor{stated} Stated & \cellcolor{stated} Stated
      & All components are stated, full code shared and seeds set. \\[0.4em]

    \cellcolor{partialaccess!40}\textbf{Private model} 
      & \cellcolor{estimated} Estimated & \cellcolor{stated} Stated & \cellcolor{stated} Stated
      & Model unavailable; $\mathcal{F}$ estimated from predictions. \\[0.4em]

    \cellcolor{partialaccess!40}\textbf{No model seed} 
      & \cellcolor{estimated} Estimated & \cellcolor{stated} Stated & \cellcolor{stated} Stated
      & Model code shared but seed missing; have to treat $\mathcal{F}$ as containing all possible model seeds \\[0.4em]

    \cellcolor{partialaccess!40}\textbf{No CF seed} 
      & \cellcolor{stated} Stated & \cellcolor{estimated} Estimated & \cellcolor{stated} Stated
      & Method known but runs vary; auditors estimate possible $\mathcal{A}$. \\[0.4em]

    \cellcolor{partialaccess!40}\textbf{No utility seed} 
      & \cellcolor{stated} Stated & \cellcolor{stated} Stated & \cellcolor{estimated} Estimated
      & $u$ not fixed; auditors estimate its variation. \\[0.4em]

    \cellcolor{explainonly!40}\textbf{Explanations only} 
      & \cellcolor{estimated} Estimated & \cellcolor{estimated} Estimated & \cellcolor{estimated} Estimated
      & All components estimated from explanations. \\
    \bottomrule
  \end{tabular}
  \caption{Example scenarios with different levels of access to the models $\mathcal{F}$, counterfactual methods $\mathcal{A}$, and utility $u$. 
  \textcolor{real}{\textbf{Green}} = Real, 
  \textcolor{stated}{\textbf{Yellow}} = Stated, 
  \textcolor{estimated}{\textbf{Red}} = Estimated. 
  Only the ideal case contains real components. Row shading in the \emph{Scenario} column indicates access level, with \textcolor{fullaccess}{dark blue} for Full Procedural Access, \textcolor{partialaccess}{medium blue} for Partial Procedural Access, and \textcolor{explainonly}{light blue} for Explanation-Only Access. When \(u\) is non-injective, determining whether an explanation is cherry-picked also requires the (often implicit) tie-breaking rule, equivalently the induced ranking \(\mathrm{rank}_x\).}
  \label{tab:component_realism}
\end{table}

In order to determine whether an explanation is cherry-picked according to Definition~\ref{def:cherry_picking_non_inject},
one would need the \emph{real} explanation space \(\EFA{x}\), the utility function \(u\), and the induced ranking procedure \(\mathrm{rank}_x\).
In practice, auditors only observe a \emph{stated} specification, and can at best verify whether the reported explanation is consistent with it,
that is, whether it could be produced by the stated \((\mathcal{F},\mathcal{A})\) and whether it is top-ranked under the stated \(u\) (and its tie-breaking rule).
This verification does not rule out omittance, since undisclosed alternatives may have been considered and withheld.
In this section, we discuss three levels of access to the explanation process: full procedural access, partial procedural access, and explanation-only access.
Table~\ref{tab:component_realism} highlights example scenarios across these settings.
\footnote{Code to reproduce the empirical results in this section is available at \url{https://github.com/JamesHinns/cherry_picking_counterfactuals}}.
\vspace{-1em}

\subsection{Full Procedural Access}

We start with the case where an external auditor has full access to the stated explanation specification, given by $\mathcal{M}$, $\mathcal{F}$, $u$. 
Throughout this section, when we refer to access to the stated utility \(u\), we assume that the corresponding deterministic tie-breaking convention is also stated, so that the induced ranking \(\mathrm{rank}_x\) is well-defined.
In Table~\ref{tab:component_realism}, this is \textit{Best practice}; the \textit{Ideal} remains out of reach, since a provider can omit undisclosed alternatives they evaluated. As a result, the auditor can check consistency with the stated $\mathcal{M}$, $\mathcal{F}$, and $u$, but cannot establish that these are the full set of components considered.
In this scenario, the generator provides the full code used to generate the explanations, so we have a stated set of models $\mathcal{F}$ and explanation methods $\mathcal{A}$, as well as the utility function $u$.

\begin{example}
    Consider the following scenario:
    Bob provides counterfactual explanations to an auditor Alice.
    Alice requests Bob states his full procedure to select counterfactuals for the first 10  instances for the test set on the Adult dataset $\mathcal{X}_e$. Bob uses a random forest model with a fixed seed, so $\mathcal{F}$ only contains a single model $\mathcal{F} = \{f\}$. He then uses DiCE random with 10 different random seeds, meaning $\mathcal{A}$ contains 10 different methods, and as such at each instance $x \in \mathcal{X}_e$ there are ten explanations in the explanation space $\EFA{x}$. 
    He then states he uses a utility function based on proximity, defined as the negative of the euclidean distance between the factual and counterfactual:
    \[
        u(x,e) \;=\; - \lVert x - e \rVert_2,
    \]

    Alice attempts to recreate this but finds that the explanations Bob returns are not always the optimal in $\EFA{x}$ according to the stated $u$ and as such are cherry-picking.

    This is because in reality Bob wanted to ensure the features: sex, relationship and martial status were never present in the returned explanations, so actually used the utility function:

   \[
        S \;=\; \{\text{sex},\ \text{relationship},\ \text{marital}\}.
        \]
        \[
        u'(x,e) \;=\;
        \begin{cases}
        100000, & \text{if } \exists\, s\in S:\ s(x)\neq s(e),\\[4pt]
        -\lVert x-e\rVert_2, & \text{otherwise.}
        \end{cases}
    \]
    We show a summary of this in Figure \ref{fig:cherry-pick_example_prox}.
    In this scenario, proximity is sufficiently specified to separate the optimal from the cherry-picked, the cherry-picked explanations are at most equal in the reported utility and averagely worse.

    \begin{center}
      \includegraphics[width=0.8\linewidth]{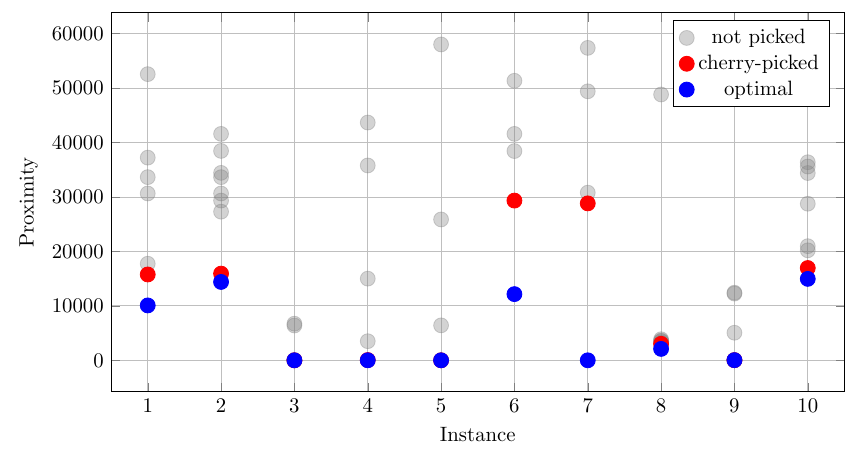}
      \captionof{figure}{Proximity of optimal vs cherry-picked explanation space for DiCE random with 10 different seeds. \vspace{-1em}}
      \label{fig:cherry-pick_example_prox}
    \end{center}
    
\label{ex:full_cherry_pick_detect}
\end{example}

This example shows that, under full procedural access, a sufficiently discriminative stated utility can make cherry-picking verifiable by checking whether the reported explanation is top-ranked in \(\EFA{x}\).
The next example shows a key limitation: even when the reported explanations appear consistent with the disclosed utility on a finite audit set, a provider may still have selected them under a different utility and then disclosed an alternative utility that looks legitimate and is observationally equivalent on the audited instances.

\begin{example}
Consider the same setting as example \ref{ex:full_cherry_pick_detect}, but instead of a proximity-based utility, Bob reports a sparsity based metric.
\[
u(x,e) \;=\; -\lVert x-e\rVert_0,
\]
He once more wants to not return any counterfactuals changing a restricted set of features $S=\{\text{sex},\text{relationship},\text{marital}\}$, relationship or martial status and so actually uses a utility function $u'$. We also let $d$ be the number of dimensions of the data $\mathcal{X}$ and explanation $\mathcal{E}$ space (the number of features in the data): $x,e \in \mathbb{R}^d$.
\[
u'(x,e)
\;=\;
-\min\!\left\{
\lVert x-e\rVert_0 \;+\; d\cdot \mathbf{1}\!\left[\exists\, s\in S:\ s(x)\neq s(e)\right],
\ d
\right\}.
\]

If Alice evaluates the reported explanations under \(u\), then on the first ten instances in \(\mathcal{X}_e\) she observes no discrepancy: for each \(x\), the explanation returned by optimising \(u'\) attains the same \(u(x,\cdot)\) value as an explanation that could have been selected by optimising \(u\). Consequently, based on these ten instances alone, the disclosed utility \(u\) does not allow Alice to certify that Bob avoided restricted-feature changes by switching to \(u'\), since the observed outputs remain compatible with the stated specification. However, this apparent agreement is an artefact of the limited audit set: if Alice instead evaluates the first 100 test instances, then for 8 instances the explanation selected by \(u'\) is strictly worse under \(u\) than the \(u\)-optimal alternative in \(\EFA{x}\), which would constitute detectable cherry-picking with respect to \(u\).

\label{ex:full_cherry_pick_nondetect}
\end{example}

These examples are representations of real runs on the Adult dataset. They instantiate the same underlying explanation set \(\EFA{x}\), but differ in the stated specification used to justify the reported explanation. This illustrates omittance as in Example~\ref{ex:omittance}: by withholding or selectively reporting components of the true specification \((\mathcal{F},\mathcal{A},u)\), a provider can make a cherry-picked explanation appear consistent with what is stated to an auditor. \\

Here, the key degrees of freedom are edits to the counterfactual method \(\mathcal{A}\) (which determine which feature changes are feasible) and the disclosed utility \(u\) (which determines the ranking over \(\EFA{x}\)). Since DiCE random only weakly optimises for sparsity, the number of feature changes is often a poor discriminator between these settings, so restricted-\(\mathcal{A}\) and unrestricted-\(\mathcal{A}\) runs can yield counterfactuals with similar sparsity despite different semantics. More generally, concealment becomes easier when the specification induces many ties (or near-ties) among explanations. Omittance while presenting a sound-looking specification is easier when many reasonable choices exist, since limited degrees of freedom make it harder to swap components without affecting behaviour.

This also shows that, for an individual data instance, it can be simple to disclose a seemingly reasonable \(u\) under which a preferred explanation is top-ranked, even if \(u\) was selected after inspecting \(\EFA{x}\). In later experiments we therefore move to mean utilities across multiple instances, reflecting the more complex and less informative auditing settings considered in the remainder of the section; however, omittance remains possible whenever evaluation is based on a finite set of instances.

\subsection{Partial Procedural Access}

In this setting, the auditor has access to some, but not all, information about the explanation pipeline. As shown in Table \ref{tab:component_realism}, partial access covers a wide range of scenarios. 
For example, the code may be available but the random seeds for counterfactual generation are not fixed; alternatively, the model may be private, so the auditor must rely on query access or on a surrogate that provides predictions for a given instance.

Based on Definition \ref{def:cherry_picking_non_inject}, an ideal detector for cherry-picking would construct the explanation space $\EFA{x}$ and check whether there exists an explanation $e' \in \EFA{x}$ whose utility exceeds that of the reported explanation $e$. The explanation space $\EFA{x}$ is defined in Definition \ref{def:cherry_picking_non_inject} as
\[
\EFA{x}
  \;=\;
  \{\, A_f(x) \;\mid\; f \in \mathcal{F},\ A \in \mathcal{A} \,\}.
\]

If a counterfactual method is non-deterministic, each execution must be treated as a distinct admissible mapping, since every $A \in \mathcal{A}$ is required to be single-valued. Consequently, a single non-deterministic procedure can induce infinitely many possible mappings $A$, corresponding to different realisations of its randomness. Although practical implementations typically impose finite constraints, such as finite seeds or datasets, explicitly enumerating the explanation space $\EFA{x}$ can still be computationally infeasible.

\begin{figure}
  \centering
  \includegraphics{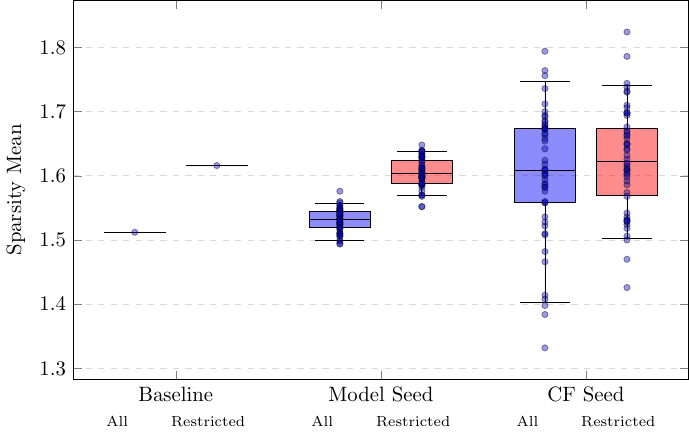}
  \caption{Mean sparsity of explanations generated by DiCE (random), comparing a baseline run with 50 different seeds for both the model and the counterfactual generation. Results are shown for \textbf{All}, where all features may be edited, and \textbf{Restricted}, where edits are limited to a predefined restricted feature set (non-sensitive features only).}
  \label{fig:sparsity_dice}
\end{figure}

Verifying the complete set $\EFA{x}$ by repeatedly calling the non-deterministic explanation algorithm would amount to a halting problem in a continuous space, since it would always be possible for a new run of $A \in \mathcal{A}$ to produce a previously unseen value. Informally, the halting problem is the fact that there is no general procedure that can decide, for every possible algorithm and input, whether the algorithm will ever terminate (or, equivalently here, whether continuing to run it could still yield something new). However, as we will discuss in Remark \ref{ex:restriction_detectable}, in a discretised setting the space is not truly infinite, and so it may be possible, in principle, to produce every possible discrete value in the space (although this may be unlikely in practice). Therefore, without any explicit bounding of $\mathcal{F}$ or $\mathcal{A}$, defining an explanation as cherry-picking is infeasible.

\begin{figure}
    \centering
    \includegraphics[width=0.8\linewidth]{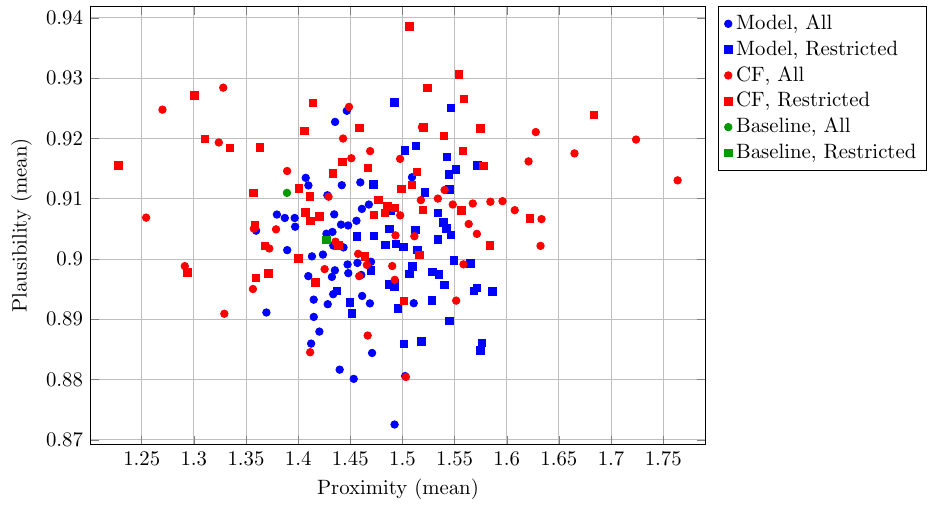}
    \caption{Mean plausibility versus proximity on the Adult dataset (50 seeds per group). Colours indicate the source of variation: Baseline uses a fixed seed (42), while Model and CF represent variations where the model seed or counterfactual initialisation seed are randomised, respectively. The second term denotes the feature constraints: \textit{All} permits edits to any feature, whereas \textit{Restricted} limits edits to non-sensitive features only.}
    \label{fig:prox_plaus_50_repeats}
\end{figure}

In Examples \ref{ex:full_cherry_pick_detect} and \ref{ex:full_cherry_pick_nondetect}, we showed how different utility functions can support different narratives. We now show empirically that randomness in the explanation procedure can induce variation that matches, and sometimes exceeds, the variation induced by explicit design choices such as restricting which features may be edited. This matters because it can be difficult to distinguish intentional selection from variation that could plausibly arise under unrestricted seed choices. In Figures \ref{fig:prox_plaus_50_repeats} and \ref{fig:sparsity_dice}, we generate counterfactual explanations with DiCE for a Random Forest model, comparing an unrestricted setting (all features editable) to a restricted setting in which only non-sensitive features may be edited, mirroring the cherry-picking done in Examples \ref{ex:full_cherry_pick_detect} and \ref{ex:full_cherry_pick_nondetect}. \\

DiCE random does not optimise a strict utility function (when total cfs is set to 1) when generating single counterfactuals; rather, it weakly optimises for sparsity. It does so by randomly swapping $n$ features with their respective values from other points in the training set, starting with $n=1$ and increasing $n$ until a valid counterfactual is found. Since the method returns the first valid solution it encounters, both stochasticity and the admissible edit space can substantially affect the sparsity of the returned explanation. \\

In Figure \ref{fig:sparsity_dice} we show that the effect of restricting editable features changes average sparsity relative to editing all features when varying seeds. We run 50 random seeds for the model and 50 random seeds for counterfactual generation, and repeat each sweep under both feature-edit settings, yielding 200 runs in total. Each point reports mean sparsity over the same 500 test instances. Pairing restricted and unrestricted runs by seed choice gives 100 comparisons. In 25 of these 100 pairs, the restricted setting attains lower mean sparsity than its unrestricted counterpart. This is not evidence that restriction is inherently beneficial; rather, because DiCE terminates as soon as it finds a valid counterfactual, restricting the search can steer the heuristic away from non-meaningful changes, even while other potentially better explanations remain unexplored. \\

Figure \ref{fig:prox_plaus_50_repeats} shows a similar sensitivity for proximity (HEOM) and plausibility (IM1), and also highlights the added difficulty when the auditor does not know which utility, if any, the provider prioritised. This is also shown in Figure \ref{fig:partial_runs:seed_method} with only 10 repeats, to clearly see the diffences between the baseline, the cherry-picked (restricted feature set, same seeds as baseline), and different seeds for model and counterfactual generation. \\

Figure \ref{fig:sparsity_dice} further shows that the variation induced by changing the model seed is smaller than the variation induced by the counterfactual generation procedure, aligning with the findings of \cite{brughmans2024disagreement}. Finally, Figure \ref{fig:partial_runs:method} shows that variation between DiCE methods outweighs the variation within DiCE random. This level of between-method variation creates additional scope for cherry-picking across counterfactual methods, even when the methods themselves are deterministic. \\

Together, these results show that even when code is shared and reported results are reproducible, the risk of cherry picking persists: many alternative runs, settings, or methods may have been tested and then left out of the stated specification through omittance.

\begin{figure}[t]
  \centering
  \begin{subfigure}[t]{0.48\textwidth}
    \centering
    \includegraphics[height=7cm, keepaspectratio]{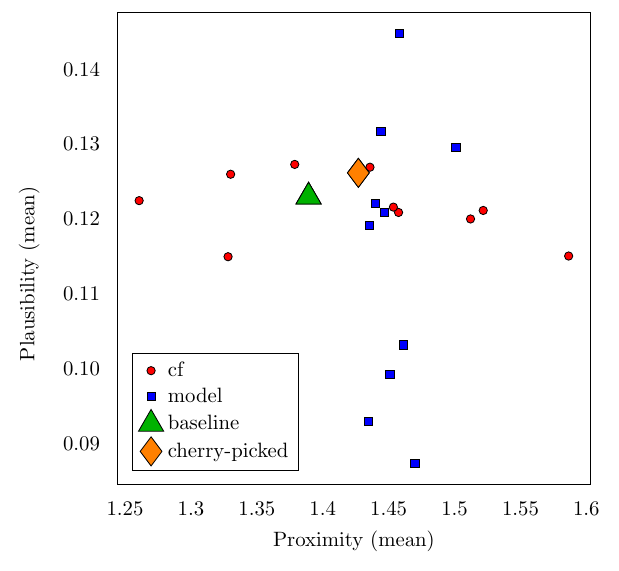}
    \caption{Effect of varying the model and counterfactual seeds on mean plausibility
    and proximity for the Adult dataset, compared with a cherry-picked configuration
    that restricts edits to sensitive features.}
    \label{fig:partial_runs:seed_method}
  \end{subfigure}\hfill
  \begin{subfigure}[t]{0.48\textwidth}
    \centering
    \includegraphics[height=7cm, keepaspectratio]{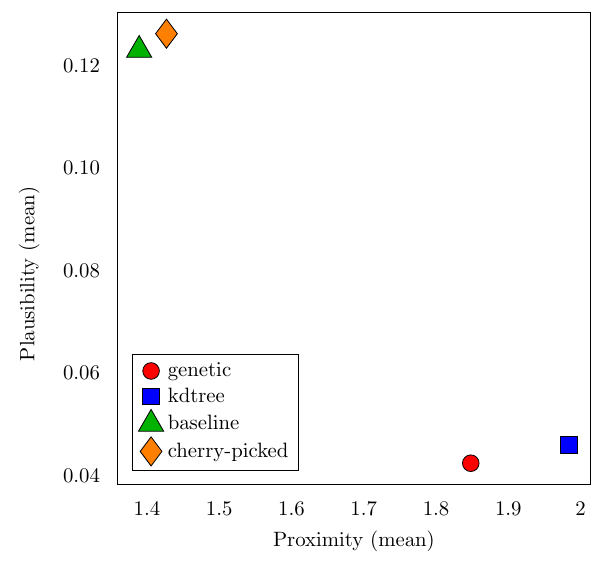}
    \caption{Comparison of dice-random cherry-picked and baseline against other dice methods.}
    \label{fig:partial_runs:method}
  \end{subfigure}

  \caption{Mean Proximity and Plausibility of cherry-picked vs.\ alternative explanation
  settings for the Adult dataset. \vspace{-1em}}
  \label{fig:partial_runs}
\end{figure}

\subsection{Explanation-Only Access}

In the explanation-only setting, the most restrictive access level, we illustrate the scale of the detection problem.
When auditors or explanation recipients have access only to the counterfactuals themselves, and the model owner provides no information about the model or the explanation method, it becomes very difficult to characterise the admissible explanation space \(\EFA{x}\) or the utility function \(u\). Such a situation may arise when several recipients of explanations from an explanation provider that shares no procedural details decide to work together to check for possible manipulation.
In practice, this may happen when recipients pool their own instances and explanations, either informally or via journalistic investigations. Independently, an auditor might only obtain the reported explanations, without access to the model or the explanation procedure.
Without insight into how the explanations were produced, it is difficult to determine which alternatives were feasible or whether the presented explanations were selectively chosen. Although cherry picking detection is not logically impossible at this level, it is infeasible in practice under almost all realistic parameter settings.

\subsubsection{Undetectability with no restrictions on model or counterfactual generation.}

\begin{proposition}
\textbf{Cherry-picking is undetectable without restrictions on $\mathcal{F}$, $\mathcal{A}$ and $u$}
\label{prop:undetectable} \\
Fix $x \in \mathcal{X}$. Suppose that both the model class $\mathcal{F}$ and the set of explanation methods $\mathcal{A}$ are unrestricted, meaning every possible model $f:\mathcal{X}\to\mathcal{Y}$ and every possible mapping $A:\mathcal{F}\times\mathcal{X}\to\mathcal{E}$ are admissible. Then $\EFA{x} = \mathcal{E}$, and it is impossible to determine whether a reported explanation $e \in \EFA{x}$ is cherry-picked under Definition~\ref{def:cherry_picking_non_inject}.
\end{proposition}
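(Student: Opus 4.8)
The plan is to establish the two parts of the statement in turn: the set identity $\EFA{x}=\mathcal{E}$, and then the undetectability claim, which I will phrase as an indistinguishability result about what an auditor with explanation-only access can observe.

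For $\EFA{x}=\mathcal{E}$, the inclusion $\EFA{x}\subseteq\mathcal{E}$ is immediate from the definition of $\EFA{x}$. For the reverse inclusion, fix any $e^\star\in\mathcal{E}$ and any $f\in\mathcal{F}$ ($\mathcal{F}\neq\emptyset$ since it is unrestricted). Because every map $A:\mathcal{F}\times\mathcal{X}\to\mathcal{E}$ is admissible, the constant map $A_g(y)=e^\star$ (for all $(g,y)$) belongs to $\mathcal{A}$, and then $e^\star=A_f(x)\in\EFA{x}$. Hence $\mathcal{E}\subseteq\EFA{x}$, giving equality. Only the unrestrictedness of $\mathcal{A}$ (together with $\mathcal{F}\neq\emptyset$) is needed for this part.

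For undetectability, I will formalise ``impossible to determine whether $e$ is cherry-picked'' as: no procedure whose input is restricted to what the auditor actually sees in the explanation-only setting --- the instance $x$ and the reported explanation $e$, equivalently a finite list $\{(x_i,e_i)\}_i$ of such pairs, with no access to $\mathcal{F}$, $\mathcal{A}$, $u$, or the ranking $\mathrm{rank}_x$ --- can output the correct cherry-picked status of $e$ for every admissible configuration. Assume $\mathcal{E}$ is countable, so that Definition~\ref{def:cherry_picking_non_inject} applies, and $|\mathcal{E}|\geq 2$ (if $\mathcal{E}$ is a singleton nothing is ever cherry-picked and the statement is vacuous). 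I would then exhibit two configurations that are observationally identical but carry opposite ground truth. In configuration $1$, take $u_1$ with $u_1(x,e')=-\mathbf{1}[e'\neq e]$ for all $e'\in\mathcal{E}$; then $e$ is the unique maximiser of $u_1(x,\cdot)$ over $\EFA{x}=\mathcal{E}$, so every utility-consistent bijection has $\mathrm{rank}_x(e)=1$, $e$ is not cherry-picked, and a provider honestly optimising $u_1$ reports exactly $e$. In configuration $2$, pick any $e''\neq e$ and any $u_2$ with $u_2(x,e'')>u_2(x,e)$; then every utility-consistent ranking places $e''$ before $e$, so $\mathrm{rank}_x(e)\geq 2$, $e$ is cherry-picked, and yet a provider may still report $e$. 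Both configurations produce the identical observation $(x,e)$ (and the extension to a finite audit set is the same argument applied instance by instance), so any detector defined on the auditor's information must be wrong on at least one of them. This is the asserted undetectability.

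The step I would be most careful with is the one just used: fixing a meaning for ``impossible to determine'' that is faithful to explanation-only access --- the auditor genuinely observes nothing about $\mathcal{F},\mathcal{A},u$ --- rather than a triviality; once that is in place, the label-flipping pair of worlds does the work, because cherry-picked status is a function of $(\EFA{x},u,\mathrm{rank}_x)$ and the auditor sees none of these. I do not expect a genuinely hard step: the result is essentially a bookkeeping consequence of the constant-map construction plus this observation. For completeness I would also note that if $\mathcal{E}$ is uncountable (e.g.\ $\mathcal{E}=\mathbb{R}^{d}$), then $\EFA{x}=\mathcal{E}$ already violates the countability hypothesis of Definition~\ref{def:cherry_picking_non_inject}, so ``$e$ is cherry-picked'' is not even well-defined there; the indistinguishability argument needs only $|\mathcal{E}|\geq 2$ and so covers precisely the countable case in which the definition applies.
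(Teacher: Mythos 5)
Your proposal is correct, and the first half (showing $\EFA{x}=\mathcal{E}$) matches the paper's proof, except that you make the witness explicit via the constant map $A_g(y)=e^\star$ where the paper simply asserts that some admissible pair $(f,A)$ realises each $e$. The second half, however, takes a genuinely different route. The paper argues undetectability as a \emph{verification-infeasibility} claim: non-cherry-picking means $\mathrm{rank}_x(e)=1$, certifying this requires ruling out a better $e'\in\EFA{x}=\mathcal{E}$, and with no restrictions on $\mathcal{F}$, $\mathcal{A}$, or $u$ there is no finite or effective procedure to do so (a point elaborated in the subsequent remark on $\Theta(|\EFA{x}|)$ cost). You instead give an \emph{information-theoretic indistinguishability} argument: you first pin down what the explanation-only auditor observes, then construct two admissible utilities $u_1,u_2$ that produce the identical observation $(x,e)$ but assign $e$ opposite cherry-picked status, so no detector defined on the observables can be correct in both worlds. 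Your version is in some ways stronger and more rigorous --- it shows determination is logically impossible for any auditor, not merely infeasible for a resource-bounded one, and it replaces the paper's somewhat unargued assertion that ``no effective procedure exists'' with a concrete witness pair. The paper's version, by contrast, isolates the enumeration burden as the obstacle, which connects more directly to its later discussion of when restrictions on $\mathcal{F}$ and $\mathcal{A}$ restore partial detectability. Your closing observation that $\EFA{x}=\mathcal{E}$ can conflict with the countability hypothesis of Definition~\ref{def:cherry_picking_non_inject} when $\mathcal{E}$ is uncountable is a legitimate tension in the proposition that the paper's proof does not address.
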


\begin{proof}
By definition,
\[
\EFA{x} = \{\,A_f(x) \mid f \in \mathcal{F},\, A \in \mathcal{A}\,\} \subseteq \mathcal{E}.
\]
If $\mathcal{F}$ and $\mathcal{A}$ are unrestricted, then for any $e \in \mathcal{E}$
there some $f \in \mathcal{F}$ and $A \in \mathcal{A}$ such that $A_f(x) = e$.
Hence $\EFA{x} = \mathcal{E}$. By Definition~\ref{def:cherry_picking_non_inject}, an explanation $e$ is not cherry-picked
if and only if $\mathrm{rank}_x(e) = 1$, that is, if no admissible explanation is ranked
strictly above $e$.
Verifying this condition requires ruling out the existence of any
$e' \in \EFA{x}$ with $\mathrm{rank}_x(e') < \mathrm{rank}_x(e)$.

When $\EFA{x} = \mathcal{E}$ and no restrictions are imposed on
$\mathcal{F}$, $\mathcal{A}$, or $u$, there is in general no finite or effective procedure
to establish that such an $e'$ does not exist.
Therefore, without restrictions on $\mathcal{F}$, $\mathcal{A}$, or $u$,
cherry-picking is undetectable.
\end{proof}

\begin{remark}
\label{ex:restriction_detectable}
In practice, $\mathcal{F}$ and $\mathcal{A}$ are not enumerated explicitly and may be very large
(or even infinite) as sets of procedures or parameterisations.
However, for any fixed instance $x$, implementations operate under finite numerical precision
and return a finite-precision explanation representation.
Consequently, although there may be infinitely many admissible pairs $(A,f)$, the set of
distinct outputs in $\EFA{x}=\{A_f(x)\mid f\in\mathcal F, A\in\mathcal A\}$ is at most
countable (and typically finite).

Consequently, $\EFA{x}$ is finite or countable but typically enormous. In the worst case, certifying that a reported explanation $e$ is not cherry-picked (i.e., is top-ranked in $\EFA{x}$) requires evaluating $u(x,e')$ for all $e'\in\EFA{x}$, which is $\Theta(|\EFA{x}|)$ time. Conversely, cherry-picking can be detected in $\Theta(1)$ time by exhibiting a single $e'\in\EFA{x}$ with $u(x,e')>u(x,e)$, but still has worst-case $\Theta(|\EFA{x}|)$ time when no such witness exists. Hence, while detection is not logically impossible, ruling out cherry-picking is generally infeasible for large $\EFA{x}$.
\end{remark}

\subsubsection{Restricted Model Class}
\label{subsubsec:restrictions}

This subsection discusses a side case where some explanation behaviours can be shown to be impossible under the stated model set $\mathcal{F}$, even when $\mathcal{F}$ and the explanation method $A$ are not fully specified. Below, we show how restricting the complexity of models in $\mathcal{F}$ via a bound on $d_{\mathrm{VC}}(\mathcal{F})$ can rule out such behaviours and thereby expose cherry-picking. Here, $d_{\mathrm{VC}}(\mathcal{F})$ denotes the Vapnik--Chervonenkis (VC) dimension, a measure of the expressive capacity of a model family: informally, it is the largest number $n$ for which there exists a set of $n$ input points such that, for every assignment of binary labels to those points, some model in $\mathcal{F}$ realises that assignment. We use VC dimension as a coarse capacity proxy: if the decision-boundary behaviour implied by a report exceeds this capacity, it cannot be realised by any $f \in \mathcal{F}$. \\

Bounding $d_{\mathrm{VC}}(\mathcal{F})$ limits how complex the decision boundaries in $\mathcal{F}$ can be, and thus constrains which explanation behaviours are attainable under any $f \in \mathcal{F}$. In practical terms, such a restriction tends to correspond to simpler model families (for example linear classifiers or shallow decision rules) rather than highly flexible families such as large neural networks. \\

Some sectors, for example retail banking where linear scorecards have long been standard, often default to comparatively simple model classes for reasons of robustness, interpretability, and ease of governance. However, these practices are largely institutional rather than legally mandated, and therefore do not guarantee any enforceable bound on model complexity.
Broader frameworks such as the GDPR and the EU AI Act emphasise transparency and push towards explainability in algorithmic decision-making~\cite{ponce2025right,goodman2017european}, but they do not constrain the underlying model class.
Recent work also notes that post-hoc explanations are frequently treated in policy debates as
the primary means of meeting these transparency obligations, rather than limiting the complexity
of the predictive model~\cite{bordt2022post}.

\begin{proposition}
\textbf{Detectability of omittance under restricted model classes}
\label{prop:restricted}\\
Let $\mathcal{F}$ be a set of functions with finite VC dimension
$d_{\mathrm{VC}}(\mathcal{F})<\infty$.
Let $E\subseteq\mathcal E$ be a reported set of counterfactual explanations.
If the decision-boundary behaviour collectively implied by $E$ would require a set of functions
with VC dimension strictly greater than $d_{\mathrm{VC}}(\mathcal{F})$, then there exists $x\in\mathcal X$
such that $E \nsubseteq \EFA{x}$.
Consequently, the report is inconsistent with the stated specification and indicates omittance: at least one reported explanation cannot be obtained from any admissible pair $(f,A)$ with $f\in\mathcal F$ and $A\in\mathcal A$ at the corresponding instance.
\end{proposition}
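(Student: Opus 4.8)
The approach is a short contraposition argument resting on two facts: (i) VC dimension is monotone under passing to a subset of a function class, so $\mathcal{F}' \subseteq \mathcal{F}$ implies $d_{\mathrm{VC}}(\mathcal{F}') \le d_{\mathrm{VC}}(\mathcal{F})$; and (ii) every reported counterfactual carries a hard separation constraint on its factual--counterfactual pair. Before invoking these I would make the hypothesis precise. Each $e \in E$ is, by the paper's standing convention, a counterfactual instance with an associated factual $x_e$, and, being a genuine counterfactual, any model $f$ that produced it must satisfy $f(x_e) \neq f(e)$ (this is often strengthened: the auditor may also know the predicted classes, e.g.\ every audited factual was rejected and every counterfactual accepted, which fixes the labels rather than merely the separation). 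I would then define ``the decision-boundary behaviour collectively implied by $E$'' to be the bundle of these constraints over all $e \in E$, together with the requirement that each constraint be realised by some function of the class under consideration; correspondingly, ``$E$ would require VC dimension strictly greater than $d_{\mathrm{VC}}(\mathcal{F})$'' is read as: no function set $\mathcal{G}$ with $d_{\mathrm{VC}}(\mathcal{G}) \le d_{\mathrm{VC}}(\mathcal{F})$ realises this bundle. A concrete sufficient condition for non-realisability can be extracted from Sauer--Shelah, which bounds the number of dichotomies a VC-$k$ class can cut on the revealed point configuration.

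With the hypothesis in this form, assume towards a contradiction that the report is consistent with the stated specification, i.e.\ for every audited instance $x$ the reported explanations at $x$ all lie in $\EFA{x}$. Then each $e \in E$ admits a witness pair $(f_e, A_e) \in \mathcal{F} \times \mathcal{A}$ producing $e$ at $x_e$; note that no restriction on $\mathcal{A}$ is needed here, only that $f_e \in \mathcal{F}$. Since $e$ is a valid counterfactual, $f_e(x_e) \neq f_e(e)$, so $f_e$ realises the constraint indexed by $e$. Hence $\mathcal{F}' := \{\, f_e : e \in E \,\}$ is a subset of $\mathcal{F}$ that collectively realises the whole bundle of constraints implied by $E$, and by monotonicity $d_{\mathrm{VC}}(\mathcal{F}') \le d_{\mathrm{VC}}(\mathcal{F})$. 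Taking $\mathcal{G} = \mathcal{F}'$ contradicts the hypothesis that the implied behaviour requires VC dimension strictly larger than $d_{\mathrm{VC}}(\mathcal{F})$. Therefore the report is not consistent: there is an audited $x$ and a reported $e$ at $x$ with $e \notin \{\, A_f(x) : f \in \mathcal{F},\, A \in \mathcal{A} \,\} = \EFA{x}$, so $E \nsubseteq \EFA{x}$. For the final clause: by definition of $\EFA{x}$, no admissible model/method pair from the stated specification produces $e$, yet the provider returned it, so whatever actually produced $e$ (a model outside $\mathcal{F}$, an undisclosed method, or outright fabrication) was withheld --- which is exactly omittance.

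I expect the only real work to be the formalisation in the first paragraph rather than the deduction in the second: ``the decision-boundary behaviour collectively implied by $E$'' must be pinned down so that (a) any set of producing models is forced to exhibit it and (b) ``requires VC dimension $> d_{\mathrm{VC}}(\mathcal{F})$'' is precisely the failure of that behaviour to be exhibited by any class of VC dimension at most $d_{\mathrm{VC}}(\mathcal{F})$; given (a), (b), and subset-monotonicity, the argument closes in one line. I would also state the result with its scope explicit: like the fairwashing-detection results discussed in Section~\ref{sec:related_work}, this yields a \emph{sound but not complete} detector --- many inconsistent reports leave no VC-detectable fingerprint --- in keeping with the paper's use of $d_{\mathrm{VC}}$ as a coarse capacity proxy.
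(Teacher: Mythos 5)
Your proposal is correct and follows the same overall contrapositive, definition-unwinding route as the paper, but it differs in one substantive step that is worth comparing. The paper's proof reduces consistency of the report to realisability by a \emph{single} model, asserting that ``there exists no $f\in\mathcal F$ for which the reported behaviours are simultaneously attainable.'' Under Definition~\ref{def:cherry_picking_non_inject}, however, each element of $\EFA{x}$ may be witnessed by a different pair $(f,A)$, so distinct reported explanations can legitimately originate from distinct models in $\mathcal F$, and the single-$f$ step does not cover that case. You close exactly this gap: you collect the witnessing models into $\mathcal F'=\{\,f_e : e\in E\,\}\subseteq\mathcal F$ and invoke monotonicity of VC dimension under subsets, $d_{\mathrm{VC}}(\mathcal F')\le d_{\mathrm{VC}}(\mathcal F)$, which contradicts the hypothesis in precisely the ``set of functions'' form in which the proposition states it. This buys an argument that actually matches the statement, and it makes explicit what the paper leaves informal, namely that nearly all of the content lies in pinning down ``the decision-boundary behaviour collectively implied by $E$'' as the bundle of separation constraints $f(x_e)\neq f(e)$ and ``requires VC dimension greater than $d_{\mathrm{VC}}(\mathcal F)$'' as non-realisability of that bundle by any class of VC dimension at most $d_{\mathrm{VC}}(\mathcal F)$. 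Your charitable reading of the conclusion (some reported $e$ lies outside $\EFA{x}$ at its own instance) agrees with the paper's parenthetical gloss, and your Sauer--Shelah remark and the sound-but-not-complete caveat are sensible additions consistent with the paper's use of $d_{\mathrm{VC}}$ as a coarse capacity proxy, though neither appears in the paper's own proof.
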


\begin{proof}
If $\mathcal{F}$ has finite VC dimension, then the set of functions it contains has bounded expressive capacity, and in particular it cannot realise arbitrarily complex label patterns on finite subsets of $\mathcal{X}$.
Any explanation method $A\in\mathcal A$ whose output depends on $f$ is therefore constrained by the representational limits of $\mathcal F$. Suppose a reported explanation set $E$ implies decision-boundary behaviour that would require VC dimension strictly greater than $d_{\mathrm{VC}}(\mathcal F)$. Then there exists no $f\in\mathcal F$ for which the reported behaviours are simultaneously attainable, and hence there exists an instance $x\in\mathcal X$ such that $E \nsubseteq \EFA{x}$ (equivalently, at least one reported explanation in $E$ is not contained in $\EFA{x}$). Consequently, if such an $E$ is reported as originating from $\mathcal F$, the report is inconsistent with the stated model class: it indicates either use of a more expressive family than $\mathcal F$, a different explanation mechanism than assumed, or an additional generative process outside the scope of $\mathcal F$.
\end{proof}

\begin{example}
\textbf{Explanations too complex for a simple model}
\label{ex:too_complex} \\

Consider a simple instance space $\mathcal{X} = \{\text{Income}, \text{Gender}\}$, where $\mathrm{Income} \in [0,100000] \cap \mathbb{N}$ and $\mathrm{Gender} \in \{0,1\}$. 
Define a manual rule-based function 
\[
  f_m = \mathrm{Gender} \oplus (\mathrm{Income} > 50{,}000),
\]
that is, the exclusive-or between gender and income being greater than 50000. 
This relationship is non-linear and cannot be represented by any linear model. 
A logistic regression trained on such data achieves approximately 50\% accuracy (in a balanced dataset like our example), predicting only the majority class, since the true decision function alternates between classes in a non-linearly separable manner.

\captionsetup{type=figure}
\includegraphics[width=\textwidth]{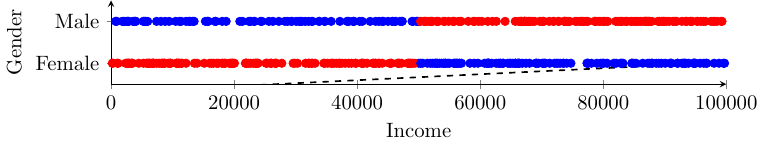}
\captionof{figure}{Example of an explanation pattern too complex for a linear model to capture. 
Colour indicates the true class label, while the dotted line shows the decision boundary of a logistic regression model.}
\label{fig:too_complex}

If a set of explanations $E$ derived from $f_m$ were claimed to come from a linear model, this would contradict the representational capacity of $\mathcal{F}$, as $E$ encodes non-linear dependencies beyond its VC dimension. Hence, the report would be inconsistent with the stated specification and indicates omittance: there exists $x\in\mathcal X$ such that $E \nsubseteq \EFA{x}$, meaning that at least one reported explanation cannot be obtained from any admissible pair $(f,A)$ with $f\in\mathcal F$ and $A\in\mathcal A$ at the corresponding instance. This illustrates how restricting $\mathcal{F}$ can make omittance theoretically detectable.
\end{example}

\section{Discussion}
\label{sec:discussion}

This paper studies cherry-picking in counterfactual explanations and the extent to which it can be detected by external auditors. We formalise cherry-picking relative to an admissible explanation space and a utility-based ranking.
We evaluate detection across three information access settings. In a \textit{full procedural access} setting, code, data, model(s), parameters, and seeds are stated, so an auditor can test conformity to the declared procedure. In a \textit{partial procedural access} setting, only some details are stated, so partial consistency checks are possible. In an \textit{explanation-only access} setting, only reported explanations are visible, so verification is rarely possible. The strength of any conclusion follows the information access setting, but our results show that detection is severely limited in practice.
\vspace{-1em}
\paragraph{The hidden degrees of freedom:}
The core issue arises from the combination of two features: the multiplicity of admissible explanation specifications and incomplete disclosure of how these procedures are instantiated. Because many $(f,A)$ pairs are formally permissible, and because the sets $\mathcal{F}$ and $\mathcal{A}$ are typically only partially specified, explanation pipelines contain latent degrees of freedoms that are invisible to external auditors.

These hidden degrees of freedom enable selective reporting. When the explanation algorithm is non-deterministic, a provider can repeatedly run the same nominal procedure, varying random seeds or stochastic optimisation trajectories, and suppress runs that exhibit undesirableso behaviour. Importantly, this vulnerability is not restricted to stochasticity alone. Even under seemingly deterministic reporting (e.g., reporting fixed seeds), selective reporting can occur if multiple seeds, hyperparameter configurations, initialisations, or compute budgets were explored but only a favourable subset is disclosed. 

This problem is structural rather than incidental. The disagreement problem implies that for any given instance, there exists a plurality of equally valid explanations~\cite{brughmans2024disagreement, goethals2023manipulation}. At every upstream stage (such as explanation algorithm, utility specification, feasibility constraints, data selection) multiple reasonible design choices exist. Together, these choices generate a large space of plausible explanations, from which a provider can select ex post, while portraying the final output as the result of a single, fully specified procedure.

\paragraph*{Why metric-based detection fails in practice:}
Metric-based detection presumes that cherry-picking induces a detectable shift in counterfactual quality scores. In practice, this assumption fails because the explanation generation process itself exhibits substantial inherent variability.

Most counterfactual methods rely on local or heuristic optimisation for tractability rather than exhaustive search. As a result, each run explores only a small and contingent portion of an extremely large admissible explanation space. This alone introduces considerable variation across runs.
Our experiments show that detection based on traditional counterfactual quality metrics fails consistently. Variation across seeds often exceeds the effect of the cherry-picking intervention itself. As a result, cherry-picked explanations fall well within the distribution of non–cherry-picked runs and frequently lie near its centre.\footnote{Importantly, cherry-picking does not necessarily degrade counterfactual quality metrics and can even improve them. For example, a provider may restrict the search space by disallowing edits to sensitive attributes such as gender. Under a fixed compute budget, this reduced space is easier to optimise, allowing the optimiser to return higher-scoring explanations while systematically excluding behaviours an auditor would wish to observe.}
Consequently, selectively reporting favourable runs while omitting unfavourable ones can remain statistically inconspicuous to metric-based detection

\paragraph*{From soundness to specification verification:}

Because outcome-based metrics cannot reliably expose cherry-picking, we shift attention from soundness to specification verification: Do the reported explanations, in fact, follow the stated specification?

This question is only meaningfully addressable under full access, where an auditor can rerun the declared procedure, verify determinism claims, and probe sensitivity to seeds and implementation details. This does not answer the question about the appropriateness of the specification itself, which will depend on the domain and governance assessment.

Crucially, even perfect reproducibility of a \textit{stated} run does not rule out omittance. Reproducibility can establish that a reported explanation could have been produced by the stated procedure, but not that it is representative of all runs that were performed. Verification can therefore confirm conformance to a declared specification, but it cannot establish completeness of disclosure.

\paragraph*{Implications for mitigation and auditing practice:}

Mitigation efforts should therefore prioritise reducing hidden degrees of freedom, rather than attempting to detect cherry-picking post hoc through metrics.

First, explanation specifications should be explicit and technically reproducible. This includes full disclosure of parameter settings, all relevant random seeds, and the computational budget used to generate explanations. Where constraints depend on compute, budgets should be expressed in evaluations or iterations rather than time; if time limits are unavoidable, they should be reported together with the hardware.

Second, when stochasticity is unavoidable, seed choice should be constrained by policy rather than left to provider discretion, for example, by requiring pre-declared seed sets or fixed evaluation protocols that aggregate over multiple seeds. A central concern is representativeness: do the reported explanations reflect typical behaviour across admissible runs, or could they plausibly be atypical and favourable? This question cannot be reliably answered under explanation-only access and is only partially addressable under partial access.

These considerations motivate standardisation as a governance response. Since appropriate specifications depend on domain-specific norms, risks, and regulatory objectives, and lack a unique ground-truth solution, domain-level convergence is especially important. In high-stakes domains such as banking, shared standards covering utility functions, feasibility constraints, model classes, counterfactual methods, and reporting protocols can substantially reduce the scope for selective reporting. 

Finally, we offer concise role-specific guidance to mitigate the risk of cherry-picking:
\begin{itemize}
    \item Algorithm developers should expose full seed control and minimise hidden sources of non-determinism.
    \item Explanation providers should prioritise strict reproducibility and disclose the complete explanation specification, including all relevant parameter settings and compute budgets.
    \item Auditors should, where possible, reproduce declared runs and stress-test sensitivity to seeds and specification components, recognising that such checks verify conformance to what is declared, not the absence of omittance.
    \item Regulators and governors should converge on domain-specific specifications that explanation providers are expected to follow.
\end{itemize}

\vspace{-1em}
\section{Conclusion}

Cherry-picking in explanations is a real and present risk, and detecting it is genuinely hard. Our results show that even under strong transparency assumptions, cherry-picking can remain statistically and procedurally indistinguishable from ordinary variability. This difficulty is not accidental, but follows from how counterfactual explanations are generated and selected in practice. Even reproducibility of a stated procedure does not certify that it was faithfully followed or that omitted alternatives were not considered. 

Without effective safeguards, explainability can therefore be used to obscure rather than to clarify. When this happens, explanations may appear reasonable and well-behaved, while still hiding important aspects of model behaviour. As a result, post hoc detection methods and explanation quality metrics provide only limited protection, especially when access to the explanation process is partial or restricted. In such settings, cherry-picking can remain indistinguishable from ordinary variation in explanation outcomes. This makes clear that \textit{post hoc} detection is an insufficient governance strategy on its own, and that stronger, domain-specific \textit{ex ante} constraints are needed on how explanations are specified, generated, and reported, rather than relying on outcome-based checks after the fact.

\vspace{-1.1em}
\section*{Acknowledgements}
This research received funding from the Flemish AI Research (FAIR) Program, and the Flemish Research Foundation (FWO grants 1247125N and G0G2721N).

\bibliographystyle{plain}
\bibliography{ref}

\end{document}